\documentclass[journal]{IEEEtran}

\ifCLASSINFOpdf
\else
   \usepackage[dvips]{graphicx}
\fi
\usepackage{url}
\usepackage{graphicx}
\usepackage{amsmath,amssymb,amsfonts}
\usepackage{cite}
\usepackage{algorithmic}
\usepackage{algorithm}
\usepackage{xcolor}
\usepackage{array}
\usepackage{booktabs}
\usepackage{adjustbox}
\usepackage[backref]{hyperref}
\newtheorem{proposition}{Proposition}

\begin{document}
\bibliographystyle{ieeetr}
\IEEEtriggeratref{14}

\title{Progressive Coupling Method for Diffusion Posterior Sampling in Ptychography}

\author{Yihan Qiu, Mingtao Huang, Tianyuan Li, Shudong Li, and Yuan Shen%
\thanks{The authors are with the Department of Electronic Engineering, Beijing National Research Center for Information Science and Technology (BNRist), Tsinghua University, Beijing 100084, China (e-mail: qyh23@mails.tsinghua.edu.cn; huangmt@mail.tsinghua.edu.cn; li-ty22@mails.tsinghua.edu.cn; sli248@jh.edu; 
shenyuan\_ee@tsinghua.edu.cn).}
}

\maketitle

\begin{abstract}
Ptychography is a computational imaging method that reconstructs a sample from overlapping diffraction measurements. This technique relies on these overlapping measurements to provide the redundancy information needed for robust reconstruction, but dense scanning increases acquisition time and data volume. Reducing scan overlaps can improve measurement efficiency but weakens the physical constraints available for reconstruction. Diffusion models provide strong learned prior information, offering a promising approach to compensate for the loss of measurement redundancy. However, effectively utilizing this prior information under weak measurement constraints remains a challenge. We propose a progressive coupling posterior sampling (PCPS) method, which allows for a broader exploration of the solution space under high uncertainty while maintaining stronger diffusion-guided consistency as the reconstruction becomes more refined. Experiments on real X-ray ptychographic data at various scan overlaps demonstrate that PCPS consistently improves reconstruction accuracy compared to traditional iterative methods and existing diffusion posterior samplers, especially at low overlaps. These results show that diffusion priors can reduce the measurement redundancy required for accurate ptychography reconstruction.
\end{abstract}

\begin{IEEEkeywords}
Inverse problem, ptychography, diffusion model, posterior sampling.
\end{IEEEkeywords}

\IEEEpeerreviewmaketitle

\section{Introduction}

\IEEEPARstart{P}{tychography} is a computational imaging technique that reconstructs the complex transmission function of a sample from overlapping diffraction measurements \cite{rodenburg2004phase,faulkner2004movable,thibault2009probe}. The overlap between adjacent scan positions provides measurement redundancy that is essential for robust reconstruction. However, achieving high overlap requires dense scanning with small scan steps, resulting in more diffraction measurements, longer acquisition times, and larger data volumes. Reducing the scan overlap therefore provides a direct way to improve measurement efficiency by covering the same field of view with fewer scan positions. This efficiency comes at the cost of reduced measurement redundancy, which weakens the constraints on the inverse problem and makes accurate reconstruction increasingly challenging. Classical reconstruction methods, including projection methods, the extended Ptychographic Iterative Engine (ePIE), and least-squares maximum-likelihood reconstruction (LSQ-ML), rely primarily on the physical forward model and can degrade substantially in this low-overlap regime \cite{elser2003phase,maiden2009improved,maiden2017further,odstrcil2018iterative}.

Diffusion models provide powerful prior information, offering a promising approach to compensate for the loss of measurement redundancy in low-overlap ptychography. To incorporate this prior information into the inverse problem, diffusion posterior sampling methods combine the learned image distribution with measurement information during inference. Diffusion posterior sampling (DPS) enforces measurement consistency during inverse diffusion and has been extended to ptychography reconstruction \cite{chung2023diffusion,cam2025ptychographic}. A complementary hybrid approach, Ptycho-LDM, uses a conditional latent diffusion model to refine coarse Difference Map reconstructions obtained from reduced probe measurements \cite{saha2025ptycho}. Decoupled annealed posterior sampling (DAPS) enables broader exploration of the solution space by relaxing the dependence on the diffusion trajectory, which is particularly advantageous for complex nonlinear inverse problems \cite{zhang2025improving}. DAPS++ further improves computational efficiency by separating prior-based initialization from measurement-driven refinement \cite{chen2025dapspp}. These advances highlight an important trade-off in diffusion-based reconstruction: more strict adherence to the diffusion trajectory preserves the learned prior structure, while greater deviation from the diffusion trajectory allows for a broader exploration of the solution space in the case of weak or ambiguous measurements.

Motivated by this trade-off, we propose progressive coupling posterior sampling (PCPS), which adapts the balance between measurement-driven exploration and diffusion-guided consistency over the sampling trajectory. At high noise levels, where uncertainty remains large, PCPS maintains greater freedom for measurement-driven exploration. As the noise level decreases and the reconstruction becomes more refined, PCPS progressively strengthens consistency with the diffusion trajectory to better preserve the learned structure. PCPS also naturally supports sparse sampling, enabling fewer sampling steps and lower computational cost. Experiments on real X-ray ptychographic data across different scan overlaps demonstrate that PCPS consistently improves reconstruction accuracy, with particularly clear advantages in the low-overlap regime.

\section{Problem Setup and Background}

\subsection{Ptychography Formula}

Ptychography reconstructs a complex-valued object $x_0\in\mathbb{C}^{m\times n}$ by scanning a localized probe across overlapping regions of the sample and recording the corresponding far-field diffraction intensities. For scan $i$,
\begin{equation}
    I_i = |\mathcal A_ix_0|^2 = |FPM_i x_0|^2,
    \label{eq:ptycho_forward}
\end{equation}
where $M_i$ extracts the local object region, $P$ denotes the probe function, and $F$ is the discrete Fourier transform. The operator $\mathcal A$ denotes the collection of \{$\mathcal A_i$\} over all scan positions.

To account for photon-counting noise in ptychography, a common data-fidelity term is the square-root intensity loss \cite{odstrcil2018iterative}
\begin{equation}
\ell_y(x_0)=
\left\|
\sqrt{y+\epsilon_y}
-
\sqrt{|\mathcal A x_0|^2+\epsilon_y}
\right\|_2^2,
\label{eq:data_loss}
\end{equation}
where $\epsilon_y>0$ is a small constant for numerical stability. 

\subsection{Diffusion Model and DAPS}

In diffusion model-based methods, the forward diffusion process gradually perturbs the clean sample $x_0$ by adding Gaussian noise, with transition \cite{ho2020ddpm}
\begin{equation}
 q(x_t\mid x_{t-1})
 =\mathcal N\left(\sqrt{\alpha_t}x_{t-1},\beta_t I\right),
\label{eq:forward_transition}
\end{equation}
where $\alpha_t=1-\beta_t$. By composing these transitions, $x_t$ can be sampled directly from $x_0$ according to
\begin{equation}
 q_t(x_t\mid x_0)
 =\mathcal N\left(m_tx_0,v_tI\right),
 \ 
 m_t=\sqrt{\bar\alpha_t},
 \ 
 v_t=1-\bar\alpha_t,
\label{eq:forward_marginal}
\end{equation}
with $\bar\alpha_t=\prod_{r=1}^{t}\alpha_r$. 

A noise prediction network $\epsilon_\theta(x_t,t)$, trained to estimate the Gaussian noise added at each diffusion level, can then be used to recover an estimate of the underlying clean sample
\begin{equation}
 \widehat x_0(x_t)
 =\frac{x_t-\sqrt{v_t}\,\epsilon_\theta(x_t,t)}{m_t}.
\label{eq:denoised_estimate}
\end{equation}

Given measurements y, the goal of posterior sampling is to draw clean samples from the posterior distribution $p(x_0\mid y)$. Since direct sampling is generally intractable, diffusion-based methods traverse a sequence of noise-perturbed posterior marginals induced by the forward diffusion process
\begin{equation}
 p_t(x_t\mid y)
 =\int q_t(x_t\mid x_0)p(x_0\mid y)\,dx_0.
\label{eq:noisy_posterior}
\end{equation}
As the noise level approaches zero, this distribution recovers the target clean posterior $p(x_0\mid y)$.

DAPS performs this transition in two steps. To move from the current noise level $t$ to a lower level $s<t$, DAPS first approximates a clean conditional sample
\begin{equation}
x_{0\mid y} \sim p(x_0 \mid x_t,y),
\label{eq:clean_conditional}
\end{equation}
and then re-noises it according to the diffusion marginal
\begin{equation}
x_s \sim q_s(x_s \mid x_{0\mid y})
=
\mathcal N(m_s x_{0\mid y}, v_s I).
\label{eq:daps_transition}
\end{equation}

When $x_t \sim p_t(x_t \mid y)$, this two-step update yields $x_s \sim p_s(x_s \mid y)$. DAPS therefore uses $x_t$ through the clean conditional in Eq. \eqref{eq:clean_conditional}. Once $x_{0\mid y}$ has been obtained, the re-noising step in Eq. \eqref{eq:daps_transition} no longer depends explicitly on $x_t$. This decoupling facilitates exploration at high noise levels, but may hinder the preservation of established structure as the noise level decreases.

\begin{figure}[ht]
\centering
\includegraphics[width=\linewidth]{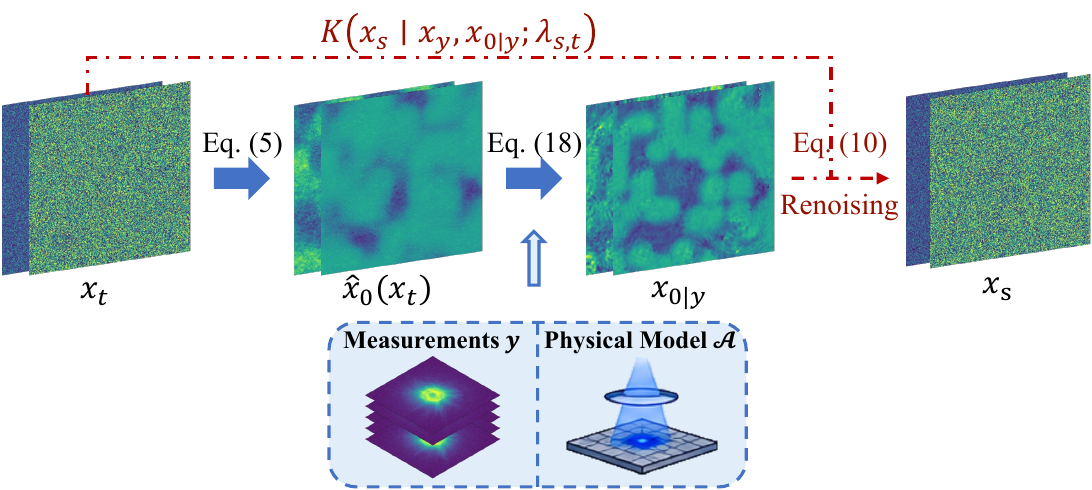}
\caption{Overview of PCPS. For a transition from noise level $t$ to $s<t$, the denoiser produces $\hat{x}_0(x_t)$, measurement refinement with $y$ and $\mathcal{A}$ yields $x_{0|y}$, and the Gaussian coupling combines the refined clean estimate, the residual of the current noisy state, and fresh Gaussian noise to sample $x_s$. The predetermined coefficient $\lambda_{s,t}$ increases toward the clean endpoint.}
\label{fig:apr_schematic}
\end{figure}

\section{Progressive Coupling for Posterior Sampling}

This section introduces the proposed progressive coupling posterior sampling (PCPS) method, which balances measurement-driven exploration and diffusion-guided consistency over the sampling trajectory. Figure~\ref{fig:apr_schematic} provides an overview of the sampling procedure.


\subsection{Posterior Marginal-Preserving Gaussian Coupling}

DAPS maps $p_t(x_t\mid y)$ to $p_s(x_s\mid y)$ through independent re-noising where $x_s$ is decoupled with $x_t$. However, preservation of the target marginal $p_s(x_s\mid y)$ does not impose a fully decoupled 
transition. PCPS therefore constructs a family of Gaussian transition kernels with a coupling coefficient that controls how much information from $x_t$ is retained while maintaining the desired posterior marginal.

\begin{proposition}[Gaussian coupling]
For a predetermined coupling coefficient
$\lambda_{s,t}\in[0,1]$, $s<t$, define the Gaussian transition kernel
\begin{equation}
K(x_s\mid x_t,x_0;\lambda_{s,t})
=
\mathcal N\!\left(
\mu_{s,t}^{\lambda},
\sigma_{s,t}^2 I
\right),
\label{eq:coupling}
\end{equation}
where
\begin{equation}
\begin{aligned}
\mu_{s,t}^{\lambda}
&=
m_sx_0
+\lambda_{s,t}b_{s,t}(x_t-m_tx_0),\\
\sigma_{s,t}^2
&=
v_s-\lambda_{s,t}^2b_{s,t}^2v_t.
\end{aligned}
\end{equation}
Here, $
b_{s,t}
=
\sqrt{\frac{\bar\alpha_t}{\bar\alpha_s}}
\frac{v_s}{v_t}
$. If
$x_t\mid x_0\sim q_t(x_t\mid x_0)$,
then the induced marginal satisfies
\begin{equation}
\int
K(x_s\mid x_t,x_0;\lambda_{s,t})
q_t(x_t\mid x_0)\,dx_t
=
q_s(x_s\mid x_0).
\label{eq:coupling_marginal}
\end{equation}
\end{proposition}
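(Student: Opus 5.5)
Since every distribution involved is Gaussian with isotropic covariance, I would prove the proposition by writing the two-stage sampling as an explicit affine combination of independent standard normals. Conditional on $x_0$, I will represent $x_t = m_t x_0 + \sqrt{v_t}\,\xi$ with $\xi\sim\mathcal N(0,I)$, which is exactly the marginal $q_t(x_t\mid x_0)$ in Eq.~\eqref{eq:forward_marginal}. Drawing $x_s$ from the kernel in Eq.~\eqref{eq:coupling} then means
\begin{equation*}
x_s = m_s x_0 + \lambda_{s,t} b_{s,t}\sqrt{v_t}\,\xi + \sigma_{s,t}\,\eta ,
\end{equation*}
where $\eta\sim\mathcal N(0,I)$ is independent of $\xi$.

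The left-hand side of Eq.~\eqref{eq:coupling_marginal} is the law of this $x_s$ given $x_0$. It is the law of an affine map of the jointly Gaussian vector $(\xi,\eta)$, so it is Gaussian and determined by its mean and covariance. The mean is $m_s x_0$, since both noise terms have zero mean. The covariance is
\begin{equation*}
\lambda_{s,t}^2 b_{s,t}^2 v_t\, I + \sigma_{s,t}^2 I = v_s I,
\end{equation*}
using the definition of $\sigma_{s,t}^2$. This is exactly $q_s(x_s\mid x_0)=\mathcal N(m_s x_0, v_s I)$. Equivalently, one can integrate the product of Gaussian densities directly, or compare characteristic functions, but the reparameterization argument is the cleanest.

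The one substantive step is checking that the kernel is well defined, i.e.\ that $\sigma_{s,t}^2\ge 0$ for all $\lambda_{s,t}\in[0,1]$. It suffices to treat $\lambda_{s,t}=1$, which requires $b_{s,t}^2 v_t \le v_s$. Substituting the definition of $b_{s,t}$, this becomes
\begin{equation*}
\frac{\bar\alpha_t}{\bar\alpha_s}\,\frac{v_s^2}{v_t}\le v_s
\quad\Longleftrightarrow\quad
\bar\alpha_t v_s \le \bar\alpha_s v_t .
\end{equation*}
Writing $v=1-\bar\alpha$, the right-hand inequality reads $\bar\alpha_t-\bar\alpha_t\bar\alpha_s\le\bar\alpha_s-\bar\alpha_s\bar\alpha_t$, that is, $\bar\alpha_t\le\bar\alpha_s$. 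This holds because $s<t$ and the $\bar\alpha$ are products of factors $\alpha_r\in(0,1]$.

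I would remark that equality at $\lambda_{s,t}=1$ recovers the DDPM posterior $q(x_s\mid x_t,x_0)$, whose mean coefficient on $x_t - m_t x_0$ is exactly $b_{s,t}$. At $\lambda_{s,t}=0$ the kernel gives the DAPS re-noising of Eq.~\eqref{eq:daps_transition}. I do not expect any real obstacle beyond this nonnegativity check.
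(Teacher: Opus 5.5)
Your proof is correct and follows essentially the same route as the paper. You first verify $\sigma_{s,t}^2\ge 0$ by reducing $b_{s,t}^2v_t\le v_s$ to $\bar\alpha_t\le\bar\alpha_s$, and then observe that $x_s\mid x_0$ is Gaussian with mean $m_sx_0$ and covariance $\lambda_{s,t}^2b_{s,t}^2v_tI+\sigma_{s,t}^2I=v_sI$; your reparameterization and explicit algebra simply spell out what the paper states in one line.
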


\begin{IEEEproof}
Since $s<t$ implies $\bar\alpha_t\leq\bar\alpha_s$,
$b_{s,t}^2v_t\leq v_s$, so $\sigma_{s,t}^2\geq0$ for
$\lambda_{s,t}\in[0,1]$.
With $x_t\mid x_0\sim\mathcal N(m_tx_0,v_tI)$, the transition
induces a Gaussian $x_s\mid x_0$ with mean $m_sx_0$ and covariance
\[
\lambda_{s,t}^2b_{s,t}^2v_tI+\sigma_{s,t}^2I=v_sI.
\]
Hence the induced marginal is $q_s(x_s\mid x_0)$.
\end{IEEEproof}

At $\lambda_{s,t}=0$, the transition kernel reduces to $q_s(x_s\mid x_0)$, corresponding to independent re-noising. At $\lambda_{s,t}=1$, it recovers $q(x_s\mid x_t,x_0)$, the conditional transition induced by the original forward diffusion process. Intermediate values provide continuously varying dependence on $x_t$. This construction is related to DDIM-type non-Markovian transitions that alter cross-level dependence while preserving diffusion marginals \cite{song2021ddim}.

\begin{proposition}[posterior marginal preservation]
Let $x_t\sim p_t(x_t\mid y)$ and draw $x_{0\mid y}\sim p(x_0\mid x_t,y)$. If $\lambda_{s,t}$ is fixed independently of the realized $x_t$ and $x_{0\mid y}$, then drawing $x_s$ from Eq.~\eqref{eq:coupling} with $x_0=x_{0\mid y}$ gives
\begin{equation}
x_s\sim p_s(x_s\mid y).
\label{eq:posterior_preservation}
\end{equation}
\end{proposition}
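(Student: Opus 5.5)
The plan is to compute the law of $x_s$ by integrating the kernel in Eq.~\eqref{eq:coupling} against the joint law of the pair $(x_t,x_{0\mid y})$ produced by the sampling procedure. Once that joint law is rewritten in the forward order (first $x_0$, then $x_t$ given $x_0$), the inner integral is exactly the one handled by the Gaussian coupling proposition, Eq.~\eqref{eq:coupling_marginal}.

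\emph{Step 1: the joint law.} By construction $x_t\sim p_t(x_t\mid y)$ and $x_{0\mid y}\mid x_t\sim p(x_0\mid x_t,y)$. Hence the pair has joint density
\[
p_t(x_t\mid y)\,p(x_0\mid x_t,y)=p(x_0,x_t\mid y).
\]
I will factor this the other way as $p(x_0\mid y)\,p(x_t\mid x_0,y)$. The key modeling assumption is that $x_t$ is produced from $x_0$ by the forward diffusion, while $y$ is generated from $x_0$ through $\mathcal A$ and noise. So $y$ and $x_t$ are conditionally independent given $x_0$, which gives $p(x_t\mid x_0,y)=q_t(x_t\mid x_0)$. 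This is the same structure that defines $p_t(x_t\mid y)$ in Eq.~\eqref{eq:noisy_posterior}: integrating $p(x_0\mid y)\,q_t(x_t\mid x_0)$ over $x_0$ returns $p_t(x_t\mid y)$. Making this conditional independence explicit is the only real point of care in the argument.

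\emph{Step 2: role of the fixed coefficient.} Because $\lambda_{s,t}$ is fixed independently of the realized $x_t$ and $x_{0\mid y}$, the kernel $K(\cdot\mid x_t,x_0;\lambda_{s,t})$ is a single fixed conditional density in its arguments. If $\lambda_{s,t}$ were allowed to depend on the realized samples, the kernel would change with them. The Gaussian coupling proposition would then no longer apply directly, since its variance identity $\lambda^2 b^2 v_t+\sigma^2=v_s$ uses a deterministic $\lambda$.

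\emph{Step 3: compute the marginal of $x_s$.}
\[
p(x_s)=\iint K(x_s\mid x_t,x_0;\lambda_{s,t})\,q_t(x_t\mid x_0)\,p(x_0\mid y)\,dx_t\,dx_0 .
\]
Apply Fubini (all densities are nonnegative) and do the $x_t$-integral first. By Eq.~\eqref{eq:coupling_marginal} it equals $q_s(x_s\mid x_0)$. What remains is
\[
\int q_s(x_s\mid x_0)\,p(x_0\mid y)\,dx_0=p_s(x_s\mid y),
\]
by Eq.~\eqref{eq:noisy_posterior}, which is the claim. I expect Step 1 to be the main obstacle, namely justifying that the reversed factorization of $p(x_0,x_t\mid y)$ uses the unconditional forward kernel $q_t$; everything after that is a direct substitution.
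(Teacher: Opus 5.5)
Your proposal is correct and follows the paper's proof essentially step for step. You rewrite the joint law $p_t(x_t\mid y)\,p(x_0\mid x_t,y)$ as $p(x_0\mid y)\,q_t(x_t\mid x_0)$ using the conditional independence of $x_t$ and $y$ given $x_0$, swap the order of integration, apply Eq.~\eqref{eq:coupling_marginal} to the inner $x_t$-integral, and identify what remains with Eq.~\eqref{eq:noisy_posterior}. One small refinement to your Step~2 remark: the argument only needs $\lambda_{s,t}$ to be independent of $x_t$, since Proposition~1 is applied pointwise in $x_0$ and $\lambda^2 b^2 v_t+\sigma^2=v_s$ holds for any $\lambda\in[0,1]$, so a coefficient depending on $x_0$ alone would still be harmless.
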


\begin{IEEEproof}
The joint density of the current noisy state and the sampled clean variable satisfies
\begin{align}
p_t(x_t\mid y)p(x_0\mid x_t,y)
&=p(x_0,x_t\mid y) \nonumber\\
&=p(x_0\mid y)p(x_t\mid x_0,y) \nonumber\\
&=p(x_0\mid y)q_t(x_t\mid x_0),
\label{eq:posterior_joint_identity}
\end{align}
where the last equality follows from the forward diffusion model, under which $x_t$ is conditionally independent of $y$ given $x_0$.

The output density induced by the two-step sampling procedure is therefore
\begin{equation}
    \small
    \begin{aligned}
    \tilde{p}(x_s\mid y)
    &=\iint K(x_s\mid x_t,x_0;\lambda_{s,t})\,p(x_0\mid x_t,y)\,p_t(x_t\mid y)\,dx_t\,dx_0 \nonumber\\
    &=\int p(x_0\mid y)\left[\int K(x_s\mid x_t,x_0;\lambda_{s,t})\,q_t(x_t\mid x_0)\,dx_t\right]dx_0.
    \end{aligned}
    \label{eq:posterior_output_density}
\end{equation}

Since $\lambda_{s,t}$ is independent of the realized $x_t$ and $x_0$, Proposition~1 applies conditionally for every fixed $x_0$ and gives
\begin{equation}
\int K(x_s\mid x_t,x_0;\lambda_{s,t})\,q_t(x_t\mid x_0)\,dx_t
=
q_s(x_s\mid x_0).
\label{eq:conditional_marginal_mapping}
\end{equation}
Substituting Eq. \eqref{eq:conditional_marginal_mapping} into Eq. \eqref{eq:posterior_output_density} yields
\begin{align}
\tilde{p}(x_s\mid y)
&=\int q_s(x_s\mid x_0)p(x_0\mid y)\,dx_0 \nonumber\\
&=p_s(x_s\mid y),
\end{align}
which proves Eq. \eqref{eq:posterior_preservation}.
\end{IEEEproof}

A predetermined coupling schedule can therefore vary across noise levels while preserving the noisy posterior marginal at every visited level under exact clean conditional sampling. This permits weak coupling at high noise levels to allow greater exploration and progressively stronger coupling at low noise levels to preserve the structure already established in the current state.

\subsection{Progressive Coupling Sampler}

The exact clean conditional sample in Eq.~\eqref{eq:clean_conditional} is infeasible and is approximated using Langevin dynamics with measurement consistency and a diffusion-based reference term~\cite{zhang2025improving}. Starting from $z^{(0)}=\widehat x_0(x_t)$, $K_t$ refinement steps are performed as
\begin{equation}
\begin{aligned}
 z^{(k+1)}={}&z^{(k)}
 -\eta_t\nabla_z\ell_y(z^{(k)})
 -\eta_t\kappa_t\left(z^{(k)}-\widehat x_0(x_t)\right)\\
 &+\gamma_t\sqrt{2\eta_t}\,\xi^{(k)},
 \qquad \xi^{(k)}\sim\mathcal N(0,I).
\end{aligned}
\label{eq:refinement}
\end{equation}
where $\eta_t$ is the step size, $\kappa_t$ controls the reference term, and $\gamma_t$ is the stochastic update scale. The measurement-loss gradient enforces consistency with the ptychographic data, while the reference term keeps the update close to the clean estimate supplied by the diffusion model. After $K_t$ steps, $x_{0\mid y}$ is set to $z^{(K_t)}$.

To realize progressive coupling from weak to strong, PCPS selects $M$ positive noise levels along the diffusion timeline as the visited sampling states, ordered from high to low noise as
\begin{equation}
    \mathcal T_M=\{t_0>t_1>\cdots>t_{M-1}>0\}.
\end{equation}
Here, $M$ denotes the number of visited noise levels. The adjacent states define $M-1$ transitions, with the $j$-th transition given by $t_j\rightarrow t_{j+1}$, where $j=0,\ldots,M-2$.

To determine the coupling strength according to the sampling progress, the noise level is represented by the log-SNR
\begin{equation}
    \ell_t=\log\frac{\bar\alpha_t}{1-\bar\alpha_t}.
\end{equation}

The normalized progress of the $j$-th transition is defined as
\begin{equation}
    r_j=
\frac{\ell_{t_j}-\ell_{t_0}}
{\ell_{t_{M-2}}-\ell_{t_0}},
\qquad j=0,\ldots,M-2,
\end{equation}
so that the first transition satisfies $r_0=0$ and the last transition satisfies $r_{M-2}=1$. The coupling coefficient is then assigned using a cosine schedule,
\begin{equation}
    \lambda_j=\frac{1-\cos(\pi r_j)}{2},
    \label{eq:cosine_schedule}
\end{equation}
such that the coupling strength progressively increases as sampling proceeds from high to low noise levels.

For each transition $t_j\rightarrow t_{j+1}$, a clean estimate is first obtained from the current state $x_{t_j}$ and then refined using the measurement information to obtain $x_{0\mid y}$. The next noisy state $x_{t_{j+1}}$ is subsequently sampled using the Gaussian coupling with $x_0=x_{0\mid y}$ and $\lambda_{t_{j+1},t_j}=\lambda_j$. After reaching the final noise level $t_{M-1}$, one final denoising and refinement step is performed, and the resulting $x_{0\mid y}$ is returned as the reconstruction.

\subsection{Sparse Log-SNR Sampling}

Proposition 1 applies to arbitrary pairs $s<t$, allowing transitions between nonadjacent diffusion timesteps. For PCPS-M, the $M$ visited levels are selected by uniformly spacing their target log-SNR values between the initial noise level and the final positive noise level of the full sampling trajectory, with each target mapped to the nearest training timestep. The two endpoints are retained for all $M$. PCPS-1000 uses the full 1000-step schedule.
\section{Experiments}

\subsection{Data, Evaluation, and Baselines}

We use real X-ray ptychographic data from ptychoNN \cite{babu2023deep}, split at the full-resolution object level into 67 training, 16 validation, and 11 test objects. The diffusion prior is trained only on the training set using $768\times768$ complex crops with real and imaginary channels. Subsampling the full scan positions produces 60\%, 50\%, 40\%, 30\%, and 20\% overlap.

For each object, 2000 iterations of 3-mode LSQ-ML in tike \cite{doecode_28526} reconstruct the reference object and probe from complete scans. All experiments use the corresponding known reference probe.

Each reconstruction is formed by stitching four overlapping $768\times768$ windows. Neighboring windows are phase-aligned over their common support and averaged, followed by global phase alignment of the stitched reconstruction $\widehat x$ to the reference. Reconstruction accuracy is evaluated using
\begin{equation}
\operatorname{NRMSE}(\widehat x,x_{\rm ref})
=
\min_{\phi}
\frac{\|e^{\mathrm i\phi}\widehat x-x_{\rm ref}\|_2}
{\|x_{\rm ref}\|_2}.
\label{eq:nrmse}
\end{equation}

Comparisons are conducted against LSQ-ML \cite{odstrcil2018iterative}, DPS \cite{chung2023diffusion}, DAPS \cite{zhang2025improving}, and DAPS++ \cite{chen2025dapspp}. LSQ-ML is run for 2000 iterations at each overlap. The same diffusion prior, probe, data loss, reconstruction windows, and evaluation procedure are used for all diffusion-based methods. DAPS-M and PCPS-M additionally share the same noise grid and refinement parameters, differing only in the coupling step.

\subsection{Reconstruction Accuracy}

\begin{table}[t]
\centering
\caption{stitched complex NRMSE at different scan overlaps. reported as mean ± standard deviation over the test objects.}
\label{tab:overlap}
\footnotesize
\setlength{\tabcolsep}{1.5pt}
\renewcommand{\arraystretch}{0.95}
\begin{tabular}{@{}lccccc@{}}
\toprule
Method & 60\% & 50\% & 40\% & 30\% & 20\% \\
\midrule
DPS    & 0.390±0.335 & 0.347±0.195 & 0.374±0.107
       & 0.408±0.229 & 0.446±0.079 \\
LSQ-ML & 0.182±0.030 & 0.256±0.042 & 0.304±0.051
       & 0.403±0.064 & 0.508±0.077 \\
\midrule
\multicolumn{6}{@{}l}{\textit{100 steps}}\\
DAPS   & 0.113±0.004 & 0.133±0.005 & 0.145±0.006
       & 0.165±0.006 & 0.181±0.006 \\
DAPS++ & 0.125±0.009 & 0.143±0.015 & 0.158±0.016
       & 0.205±0.022 & 0.262±0.047 \\
PCPS   & \underline{0.106±0.007} & \underline{0.118±0.008} & \underline{0.125±0.008}
       & \underline{0.140±0.009} & \underline{0.151±0.008} \\
\midrule
\multicolumn{6}{@{}l}{\textit{500 steps}}\\
DAPS   & 0.117±0.005 & 0.138±0.008 & 0.149±0.009
       & 0.168±0.008 & 0.184±0.008 \\
DAPS++ & 0.113±0.012 & 0.133±0.009 & 0.143±0.010
       & 0.166±0.017 & 0.182±0.024 \\
PCPS   & \textbf{0.097±0.005} & \textbf{0.109±0.006} & \textbf{0.116±0.006}
       & \textbf{0.129±0.008} & \textbf{0.135±0.007} \\
\bottomrule
\label{tab:main_results}
\end{tabular}
\end{table}

\begin{figure}[t]
\centering
\includegraphics[width=\linewidth]{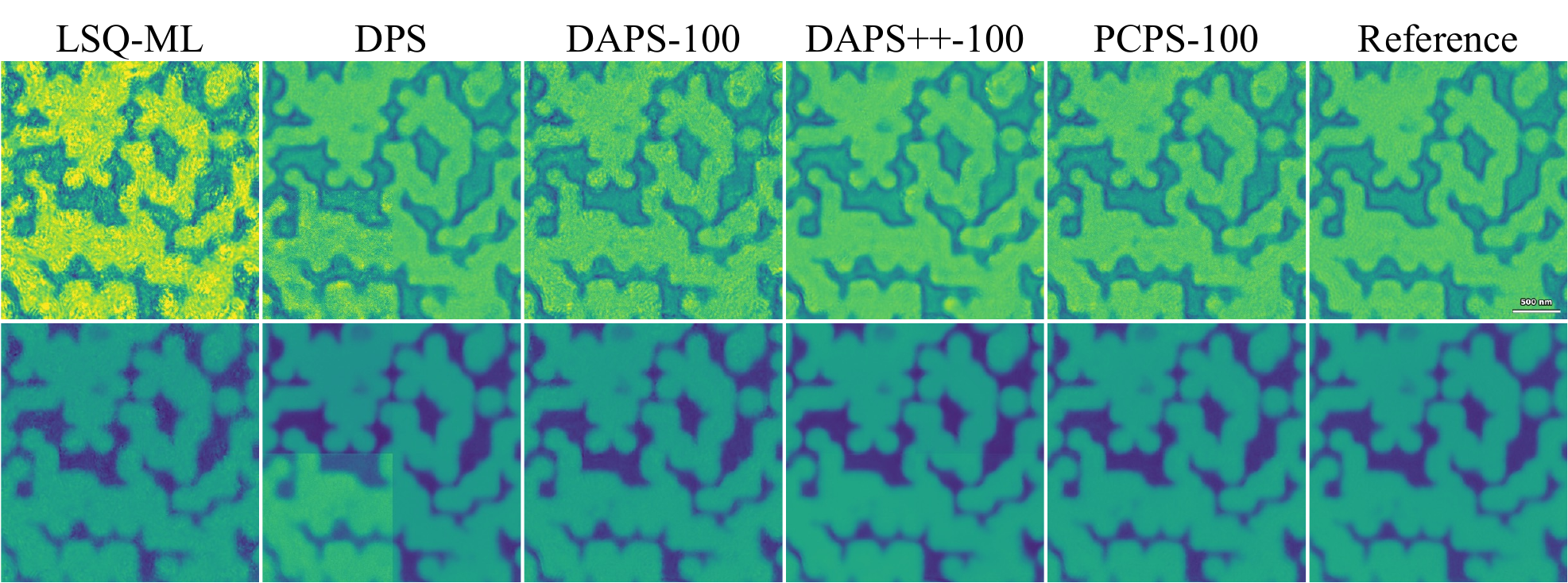}
\caption{Qualitative reconstructions at 40\% overlap. The top and bottom rows show the amplitude and phase, respectively. The scale bar denotes 500 nm.}
\label{fig:qualitative_results}
\end{figure}

Table~\ref{tab:main_results} reports the stitched complex NRMSE at five scan overlaps. PCPS consistently achieves the lowest NRMSE under both the 100- and 500-step settings, outperforming matched DAPS and DAPS++ at every overlap. The advantage over DAPS becomes more pronounced as the overlap decreases, indicating that progressive coupling is particularly beneficial as the phase retrieval problem becomes more challenging. Increasing the sampling budget from 100 to 500 steps further improves PCPS across all overlap levels, whereas DAPS does not benefit from the additional steps. DAPS++ also improves with a larger sampling budget but remains less accurate than PCPS. In comparison, LSQ-ML degrades substantially at low overlaps, while DPS exhibits both higher reconstruction errors and larger variations. Figure~\ref{fig:qualitative_results} shows qualitative results at 40\% overlap, consistent with the quantitative comparisons in Table~\ref{tab:main_results}.

\subsection{Sparse Sampling and Computational Cost}

\begin{figure}[t]
\centering
\includegraphics[width=0.9\linewidth]{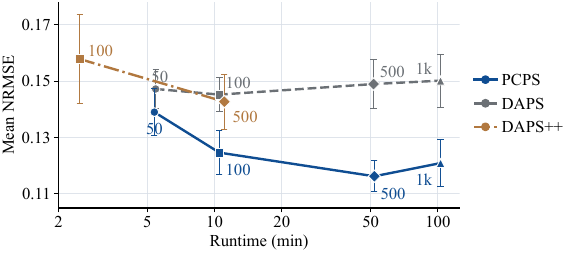}
\caption{Accuracy--runtime tradeoff at 40\% overlap for different sampling budgets. Labels indicate the number of sampling steps. Lower-left points provide better tradeoffs.}
\label{fig:accuracy_cost}
\end{figure}

Figure~\ref{fig:accuracy_cost} compares reconstruction accuracy and averaged runtime at 40\% overlap, with runtime measured on an NVIDIA GeForce RTX 3090 GPU. PCPS consistently provides a better accuracy--cost tradeoff than matched DAPS across the tested sampling budgets. PCPS-500 achieves the lowest mean NRMSE, while PCPS-100 already attains strong accuracy at substantially lower cost and outperforms DAPS++-500 at a similar runtime. These results show that PCPS remains effective under sparse sampling.

\subsection{Coupling Schedule Analysis}

\begin{figure}[t]
\centering
\includegraphics[width=0.95\linewidth]{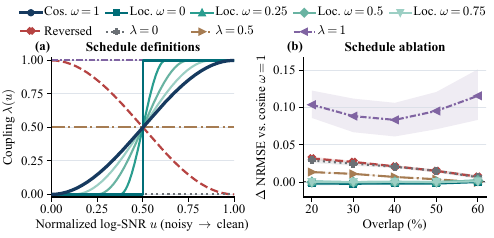}
\caption{Coupling schedule analysis. (a) Compared coupling schedules. (b) NRMSE difference relative to the full-width cosine schedule across scan overlaps.}
\label{fig:schedule_ablation}
\end{figure}

PCPS-100 is used to compare the full-width cosine schedule in Eq.(\ref{eq:cosine_schedule}) with fixed couplings $\lambda=0$, $0.5$, and $1$, a reversed cosine, and local cosine schedules centered at $r=1/2$ with different transition widths, as shown in Fig.~\ref{fig:schedule_ablation}(a). Here, $w=0$ denotes a hard switch and $w=1$ recovers the full-width cosine.

Figure~\ref{fig:schedule_ablation}(b) reports the NRMSE difference relative to the full-width cosine. The increasing schedules remain close to the baseline across all overlaps, indicating limited sensitivity to the transition width. In contrast, fixed and reversed schedules generally perform worse, with $\lambda=1$ showing the largest degradation. These results support progressively increasing the coupling as the noise level decreases.

\section{Conclusion}

This paper presents PCPS for low-overlap ptychographic reconstruction. Experiments on real X-ray data demonstrate improved reconstruction accuracy over iterative and diffusion-based baselines, particularly at low overlaps, while sparse sampling reduces computational cost with limited accuracy loss. These results indicate that diffusion priors can enable more measurement-efficient ptychographic imaging.



\bibliography{Templates/ref}

\end{document}